\pgfplotsset{compat=newest}
\DeclareMathOperator*{\argmax}{argmax}
\DeclareMathOperator*{\argmin}{argmin}
\DeclareMathOperator*{\expect}{ \mathbb{E}}
\newcolumntype{C}[1]{>{\centering\arraybackslash}p{#1}}
\newcolumntype{L}[1]{>{\arraybackslash}p{#1}}
\newcommand{\Omit}[1]{}
\newtheorem{theorem}{Theorem}
\newtheorem*{theorem*}{Theorem}
\newtheorem*{remark*}{Remark}
\title{Robust Neural Networks using Randomized Adversarial Training}
\author{
Alexandre Araujo$^{1,2}$\footnote{Contact Author}\and
Laurent Meunier$^{1,3}$\and
Rafael Pinot$^{1,4}$\And
Benjamin Negrevergne$^{1}$\\
\affiliations
$^1$PSL, Université Paris-Dauphine, Miles Team\\
$^2$Wavestone\\
$^3$Facebook AI Research\\
$^4$CEA, Université Paris-Saclay\\
\emails
\{firstname.lastname\}@dauphine.psl.eu  
}
\newcommand{\ltwo}{\ensuremath{\ell_2}\xspace}
\newcommand{\linf}{\ensuremath{\ell_\infty}\xspace}
\newcommand{\E}{\mathbb{E}}
\DeclareMathOperator*{\Vol}{Vol}
\newcommand{\prob}{\mathbb{P}}
\begin{document}

\maketitle

\begin{abstract}
    This paper tackles the problem of defending a neural network against adversarial attacks crafted with different norms (in particular $\ell_\infty$ and $\ell_2$ bounded adversarial examples). It has been observed that defense mechanisms designed to protect against one type of attacks often offer poor performance against the other. We show that $\ell_\infty$ defense mechanisms cannot offer good protection against $\ell_2$ attacks and vice-versa, and we provide both theoretical and empirical insights on this phenomenon. Then, we discuss various ways of combining existing defense mechanisms in order to train neural networks robust against both types of attacks. Our experiments show that these new defense mechanisms offer better protection when attacked with both norms.
\end{abstract}

\section{Introduction}
\label{intro}
Deep neural networks achieve state of the art performances in a variety of domains such as natural language processing~\cite{radford2018Language}, image recognition~\cite{He_2016_CVPR} and speech
recognition~\cite{hinton2012deep}. However, it has been shown that such neural networks are vulnerable to {\em adversarial examples}, i.e. imperceptible variations of natural examples, crafted to deliberately mislead the models~\cite{globerson2006nightmare,biggio2013evasion,Szegedy2013IntriguingPO}. Since their discovery, a variety of algorithms have been developed to generate adversarial examples (a.k.a. attacks), for example FGSM \citep{goodfellow2014explaining}, PGD \citep{madry2018towards} and C\&W \citep{carlini2017towards}, to mention the most popular ones.

Because it is difficult to characterize the space of visually imperceptible variations of a natural image, existing adversarial attacks use surrogates that can differ from one attack to another. For example, \citet{goodfellow2014explaining} use the \linf norm to measure the distance between the original image and the adversarial image whereas \citet{carlini2017towards} use the \ltwo norm.  When the input dimension is low, the choice of the norm is of little importance because the \linf and \ltwo balls overlap by a large margin, and the adversarial examples lie in the same space. An important insight in this paper is to observe that the overlap between the two balls  diminishes exponentially quickly as the dimensionality of the input increases. For typical image datasets with large dimensionality, the two balls are mostly disjoint. As a consequence, the \linf-bounded and the \ltwo-bounded adversarial examples lie in different area of the space, and it explains why \linf defense mechanisms perform poorly against \ltwo attacks and vice-versa. 

We show that this insight is crucial to design defense mechanisms that are robust against both types of attacks, and we advocate for the design of models that incorporate defense mechanisms against both \linf and \ltwo attacks. Then we evaluate strategies (existing and new ones) to mix up existing defense mechanisms. In particular, we evaluate the following strategies:
  \begin{enumerate}[(a)]
        \item {\em Mixed Adversarial Training} (MAT), a training procedure inspired by  {\em Adversarial Training}~\cite{goodfellow2014explaining}. It is based on augmenting training batches using \emph{both} \linf and \ltwo adversarial examples. This method defends well against both norms for PGD attacks, but fails against C\&W attacks.
        \item {\em Mixed noise injection} (MNI), a technique that consists in noise injection at test time~\cite{KolterRandomizedSmoothing, pinot2019theoretical}. We evaluate different noises and their mixture. This method defends better against C\&W attacks, but does not obtain good results against PGD attacks for \linf norm.
        \item {\em Randomized Adversarial Training} (RAT), a solution to benefit from the advantages of both \linf adversarial training, and \ltwo randomized defense. As we will show, RAT offers the best trade-off between defending against PGD and C\&W attacks.
    \end{enumerate}

The rest of this paper is organized as follows. In Section~\ref{sec:preliminaries}, we recall the principle of existing attacks and defense mechanisms. In Section~\ref{sec:no_free_lunch}, we conduct a theoretical analysis to show why  the \linf defense mechanisms cannot be robust against \ltwo attacks and vice-versa. We then corroborate this analysis with empirical results using real adversarial attacks and defense mechanisms. In Section~\ref{sec:building_defense_mechanisms}, we discuss various strategies to mix defense mechanisms, conduct comparative experiments, and discuss the performance of each strategy.

\section{Preliminaries on Adversarial Attacks and Defense Mechanisms}
\label{sec:preliminaries}

Let us first consider a standard classification task with an input space $\mathcal{X}=[0,1]^d$ of dimension $d$,  an output space $\mathcal{Y}=[K]$ and a data distribution $\mathcal D$ over $\mathcal X \times \mathcal Y$. We assume the model $f_\theta$ has been trained to minimize  a loss function $\mathcal{L}$ as follows:
 \begin{equation}
    \min_{\theta} \E_{(x,y) \sim \mathcal{D}} \left[ \mathcal{L}(f_\theta(x), y) \right]. 
    \label{eqn:classification}
 \end{equation}

\noindent
In this paper, we consider $N$-layers neural network models, therefore the model is a composition of $N$ non-linear parametric functions $\phi_{\theta_i}$ (i.e. $f_\theta= \phi^{(N)}_{\theta_N}\circ \cdots \circ \phi^{(1)}_{\theta_1}$).
 
\subsection{Adversarial attacks}
\label{subsec:adversarial_attacks}

Given an input-output pair $(x,y) \sim \mathcal{D}$, an {\em adversarial attack} is a procedure that produces a small perturbation $\tau \in  \mathcal X$  such that $f_\theta(x + \tau) \neq y$. To discover the damaging perturbation $\tau$ of $x$, existing attacks can adopt one of the two following strategies:  (i)  maximizing the loss $\mathcal L(f_\theta(x + \tau), y)$ under some constraint on $\norm{\tau}_p$, with $p \in \{0, \cdots, \infty\}$ (a.k.a. loss maximization); or (ii)  minimizing $\norm{\tau}_p$ under some constraint on the loss $\mathcal L(f_\theta(x + \tau), y)$ (a.k.a. perturbation minimization). 

\textbf{(i) Loss maximization.} In this scenario, the procedure maximizes the loss objective function, under the constraint that the $\ell_p$ norm of the perturbation remains bounded by some value $\epsilon$, as follows:  

\begin{equation}
  \argmax_{\norm{\tau}_p \leq \epsilon} \mathcal{L}(f_\theta(x+\tau),y).
  \label{eqn::lossmax}
\end{equation}

The typical value of $\epsilon$ depends on the value $p$ of the  norm $\norm{\cdot}_p$ considered in the problem setting. In order to compare \linf and \ltwo attacks of similar strength, we choose values of $\epsilon_\infty$ and $\epsilon_2$ (for \linf and \ltwo norms respectively) which result in \linf and \ltwo balls of equivalent volumes. For the particular case of CIFAR-10, this would lead us to choose $\epsilon_\infty = 0.03$ and $\epsilon_2 = 0.8$ which correspond to the maximum values chosen empirically to avoid the generation of visually detectable perturbations. 
The current state-of-the-art method to solve Problem~(\ref{eqn::lossmax}) is based on a projected gradient descent (PGD)~\cite{madry2018towards} of radius~$\epsilon$. Given a budget $\epsilon$, it recursively computes
\begin{equation}
    x^{t+1}=\prod_{B_p(x,\epsilon)}\left(x^t
    +\alpha \argmax_{\delta\text{ s.t. }||\delta||_p\leq1} \left(\Delta^t|\delta \right)\right)
    \label{eqn::projectionPGD}
\end{equation}
where $B_p(x,\epsilon) = \{ x+\tau \text{~s.t.~} \norm{\tau}_p \leq \epsilon\}$, $\Delta^t=\nabla_x\mathcal{L}\left(f_\theta\left(x^t\right),y\right)$, $\alpha$ is a gradient step size, and $\prod_S$ is the projection operator on $S$. Both PGD attacks with $p=2$, and $p=\infty$ are currently used in the literature as state-of-the-art attacks for the loss maximization problem.

\textbf{(ii) Perturbation minimization.}  This type of procedures search for the perturbation that has the minimal $\ell_p$ norm, under the constraint that $\mathcal{L}(f_\theta(x+\tau),y)$ is bigger than a given bound $c$:
  
  \begin{equation}
      \argmin_{\mathcal{L}(f_\theta(x+\tau),y) \geq c} 
      \norm{\tau}_p.
      \label{eqn::normmin}
  \end{equation}
  The value of $c$ is typically chosen depending on the loss function $\mathcal{L}$. For example, if $\mathcal{L}$ is the $0/1$ loss, any $c>0$ is acceptable.
  Problem~(\ref{eqn::normmin}) has been tackled by~\citet{carlini2017towards}, leading to the strongest method known so far. (Denoted C\&W attack in the rest of the paper.) It aims at solving the following Lagrangian relaxation of Problem~(\ref{eqn::normmin}):
  \begin{equation}
    \argmin_{\tau} \norm{\tau}_p+ \lambda \times g(x+\tau)
    \label{eqn::CWproblem}
\end{equation}
where $g(x+\tau)<0$ if and only if $\mathcal{L}(f_\theta(x+\tau),y) \geq c$. 
The authors use a change of variable $\tau=\tanh(w)-x$ to ensure that $-1 \leq x+\tau \leq 1$, a binary search to optimize the constant $c$, and Adam or SGD to compute an approximated solution. The C\&W attack is well defined both for $p=2$, and $p=\infty$, but there is a clear empirical gap of efficiency in favor of the \ltwo attack. Accordingly, for this work, we only consider C\&W as an \ltwo attack solving a norm minimization problem.

\subsection{Defense mechanisms}
\label{subsec:defense_mechanisms}

\paragraph{Adversarial Training.}
\label{paragraph:adversarial_training}

{\em Adversarial Training} (AT)  was introduced by \citet{goodfellow2014explaining} and later improved by \citet{madry2018towards} as a first defense mechanism to train robust neural networks. It consists in augmenting training batches with adversarial examples generated during the training procedure. At each training step, the standard training procedure from Equation~\ref{eqn:classification} is replaced with a $\min$ $\max$ objective function to minimize the expected value of maximum (perturbed) loss, as follows:

\begin{equation}
    \min_{\theta}\expect_{(x, y)\sim \mathcal{D}} \left[ \max_{\norm{\tau}_p \leq \epsilon} \mathcal{L} \left( f_{\theta}(x+\tau), y \right) \right].
\end{equation}
\noindent
In the case where $p=\infty$, this technique offers good robustness  against $\ell_\infty$ attacks \cite{athalye2018obfuscated}.  AT can also be performed using other kinds of attacks (including strong \ltwo attacks such as C\&W albeit at a much higher computational cost). 
However, as we will discuss in Section~\ref{sec:no_free_lunch}, \linf adversarial training  offers poor protection against \ltwo adversarial attacks and vice-versa.

\paragraph{Noise injection mechanisms.}\label{subsec:randomized_training}

Another important technique to design robust models  against adversarial attacks is to inject noise in the model. Injecting a noise vector $\eta$ at inference time results in a randomized neural network $\tilde{f}_\theta:= f_\theta(x + \eta).$ 

In contrast with Adversarial Training,  noise injection mechanisms are, in certain cases, provably robust against adversarial examples as discussed by \citet{pinot2019theoretical,KolterRandomizedSmoothing}. Empirical results have also demonstrated their efficiency against \ltwo adversarial attacks~\cite{DBLP:journals/corr/abs-1811-09310}. These works focus however on Gaussian and Laplace distributions a.k.a generalized Gaussian of order $2$, and $1$ respectively. As the limit of a generalized Gaussian density~\cite{gengauss} when $p\rightarrow\infty$ is a Uniform distribution, we also investigate the injection of uniform noise to defend against $\linf$ attacks.

\begin{figure*}[ht]
  \centering
  \begin{minipage}{.33\linewidth}
    \centering
    \includegraphics[scale=0.15]{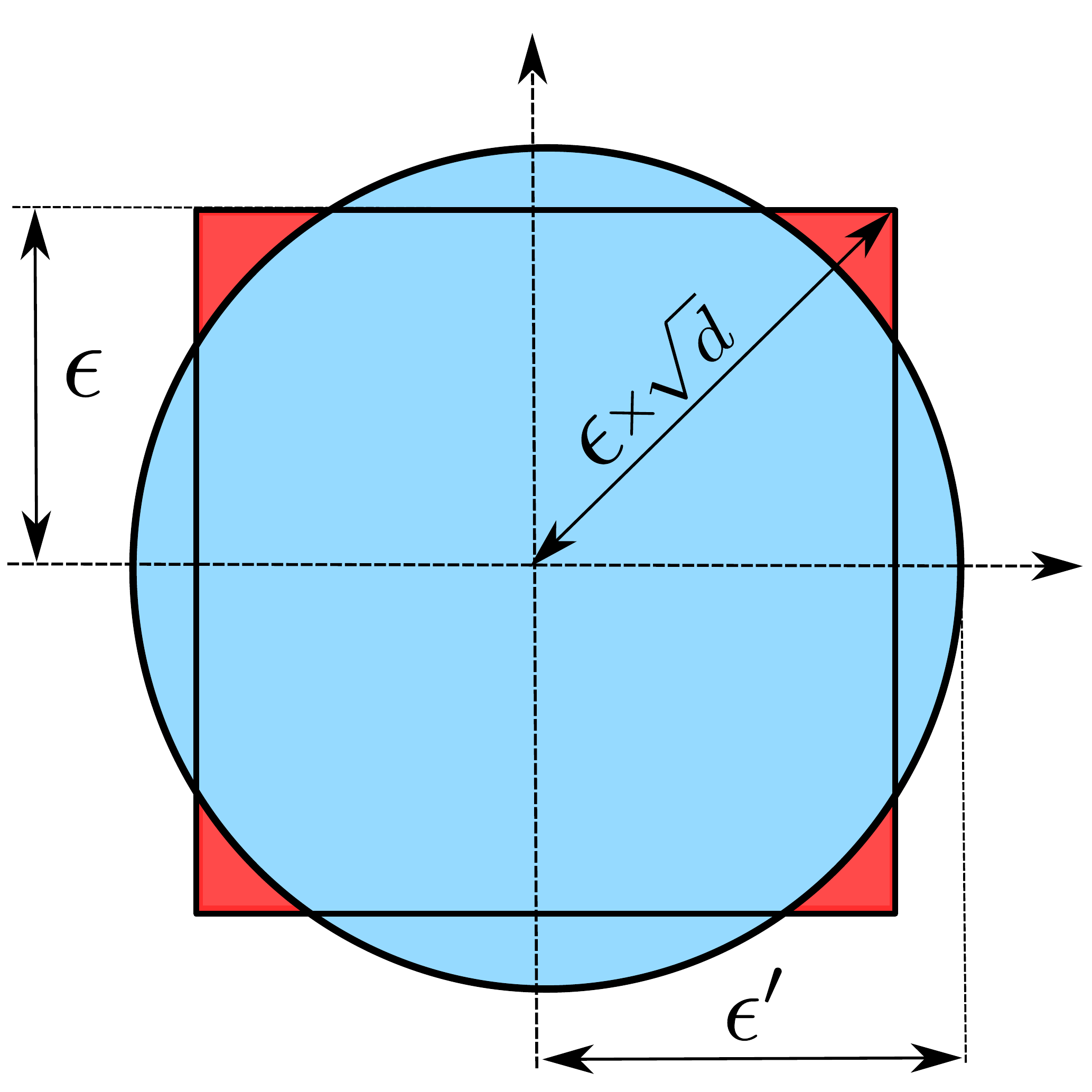}\\(a)
  \end{minipage}
  \begin{minipage}{.33\linewidth}
    \centering
    \includegraphics[scale=0.15]{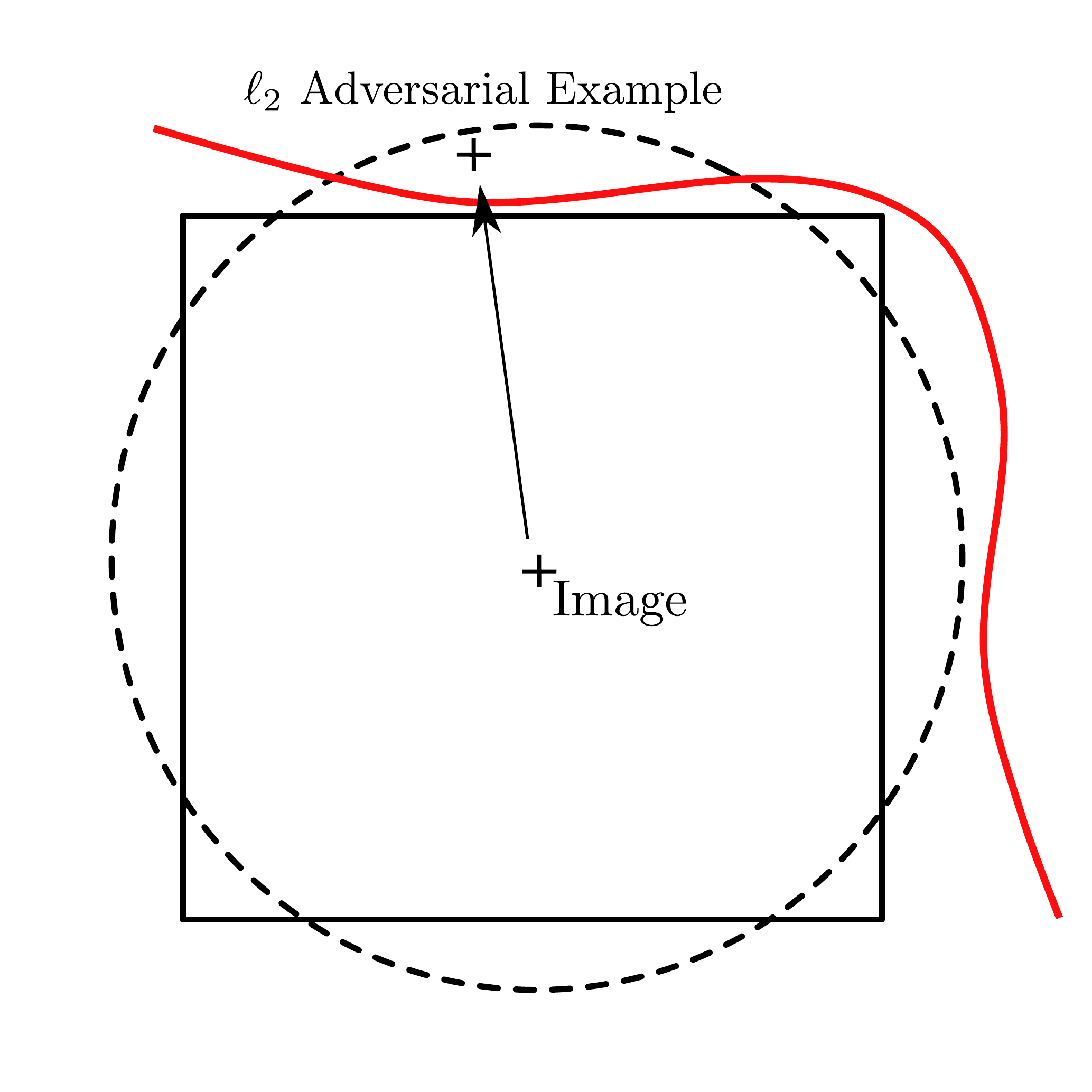}\\(b)
  \end{minipage}
  \begin{minipage}{.33\linewidth}
      \centering
      \includegraphics[scale=0.15]{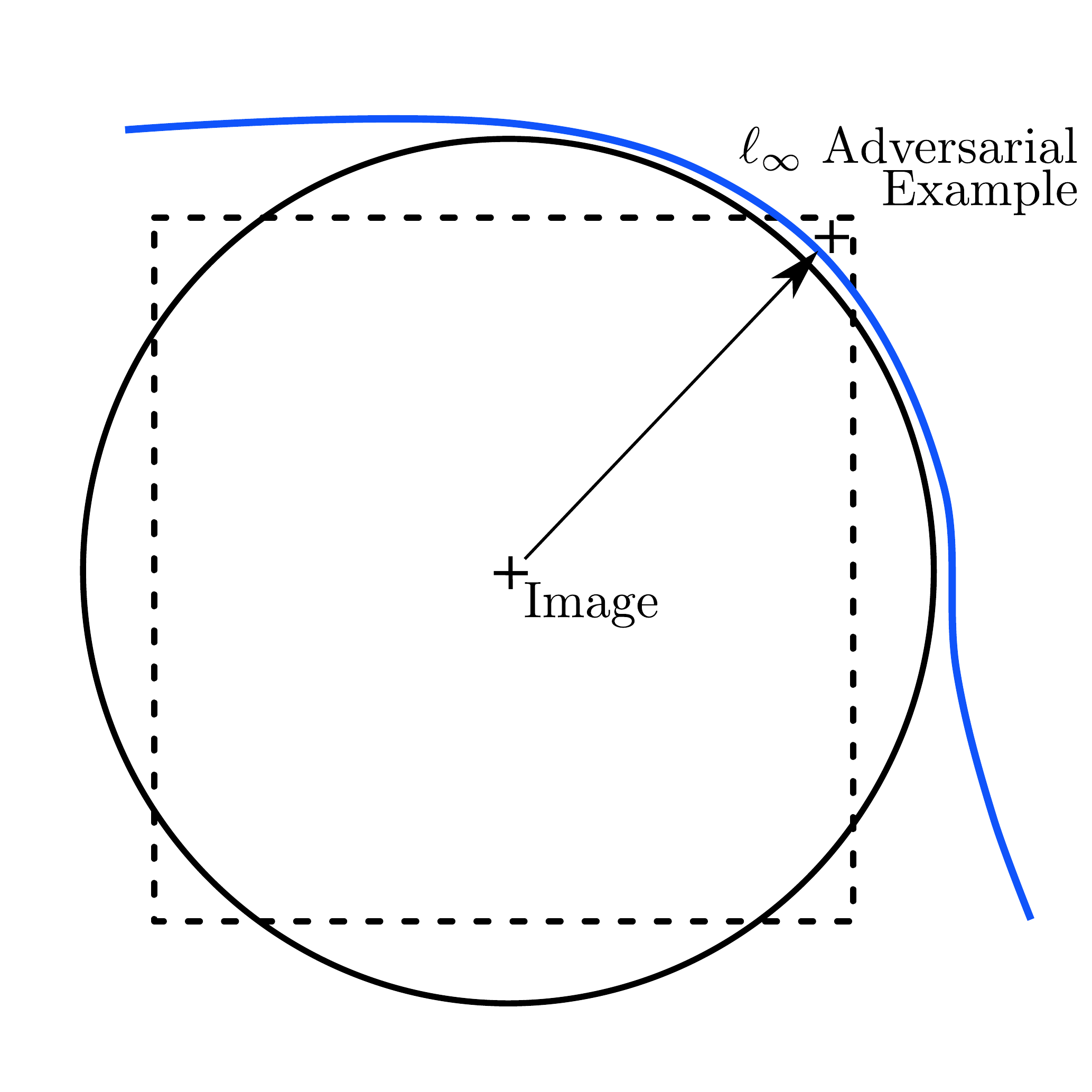}\\(c)
  \end{minipage}
    \caption{ Left: 2D representation of the \linf and \ltwo balls of respective radius $\epsilon$ and $\epsilon'$. 
    Middle: a classifier trained with \linf adversarial perturbations  (materialized by the red line) remains vulnerable to \ltwo attacks. 
    Right: a classifier trained with \ltwo adversarial perturbations (materialized by the blue line) remains vulnerable to \linf adversarial examples.}
  \label{figure:balls}
\end{figure*}%

\section{No Free Lunch for adversarial defenses}
\label{sec:no_free_lunch}

\subsection{Theoretical analysis}

Let us consider a classifier $f_{\epsilon_\infty}$ equipped with an {\em ideal defense mechanism} against adversarial examples bounded with an $\ell_\infty$ norm of value $\epsilon_\infty$. It guarantees that for any input-output pair $ (x,y) \sim \mathcal D$ and for any perturbation $\tau$ such that $\norm{\tau}_\infty \leq \epsilon_\infty$, $f_{\epsilon_\infty}$ is not misled by the perturbation (i.e. $f_{\epsilon_\infty}(x + \tau) = f_{\epsilon_\infty}(x)$).
We now focus our study on the performance of this classifier against adversarial examples bounded with an \ltwo norm of value $\epsilon_2$. 

Using Figure~\ref{figure:balls}(a), we observe that any \ltwo adversarial example that is also in the \linf ball, is guaranteed to be protected by the \linf defense mechanism of $f_{\epsilon_\infty}$, but not if it is outside the \linf ball. 
To characterize the probability that an  $\ell_2$ perturbation is guaranteed to be protected by an $\ell_\infty$ defense mechanism in the general case (i.e. any dimension $d$), we measure the ratio between the volume of intersection of the $\ell_\infty$ ball of radius $\epsilon_\infty$ and the $\ell_2$ ball of radius $\epsilon_2$. As Theorem~\ref{theorem:nullvolume} shows, this ratio depends on the dimensionality $d$ of the input vector $x$, and  rapidly converges to zero when $d$ increases. 
Therefore a defense mechanism that protects against all \linf bounded adversarial examples, is unlikely to be efficient against \ltwo attacks.

\begin{theorem}[Probability of the intersection goes to $0$]
\label{theorem:nullvolume}
Let $B_{2,d}$, and $B_{\infty,d}$ be two $d$ dimensional balls, respectively for \ltwo norm and \linf norm. If for all $d$, one constrains $B_{2,d}$, and $B_{\infty,d}$ to have the same volume, then $$\frac{\Vol(B_{2,d}\bigcap B_{\infty,d})}{\Vol(B_{\infty,d})} \rightarrow 0 \text{ when } d\rightarrow \infty. $$
\end{theorem}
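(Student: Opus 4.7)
The plan is to rewrite the ratio as a probability with respect to the uniform measure on the $\ell_\infty$ ball, compute the radius $r_2(d)$ forced by the equal-volume constraint via Stirling, and then invoke concentration to show that this probability vanishes. By homogeneity I may assume that $B_{\infty,d} = [-1/2,1/2]^d$, so $\Vol(B_{\infty,d}) = 1$, and then pick $r_2(d) > 0$ so that $\Vol(B_{2,d}) = 1$, i.e.\ $r_2(d)^d \, \pi^{d/2}/\Gamma(d/2+1) = 1$.

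First I would apply Stirling's formula to the equation above to derive the asymptotics
\begin{equation*}
r_2(d) \;=\; \left(\frac{\Gamma(d/2+1)}{\pi^{d/2}}\right)^{1/d} \;\sim\; \sqrt{\frac{d}{2\pi e}} \qquad (d \to \infty).
\end{equation*}
This is the key quantitative input: as $d$ grows, the $\ell_2$ ball of volume $1$ must have a radius of order $\sqrt{d}$, yet its intersection with the unit cube is still constrained by the cube.

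Next I would rewrite the ratio as a probability. Since the uniform density on $B_{\infty,d}$ equals $1$, we have
\begin{equation*}
\frac{\Vol(B_{2,d}\cap B_{\infty,d})}{\Vol(B_{\infty,d})} \;=\; \prob\bigl(\|X\|_2 \leq r_2(d)\bigr),
\end{equation*}
where $X = (X_1,\dots,X_d)$ has i.i.d.\ coordinates uniform on $[-1/2,1/2]$. A direct moment computation gives $\E[X_i^2] = 1/12$, hence $\E\|X\|_2^2 = d/12$.

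The final step is concentration. Since $2\pi e \approx 17.08 > 12$, the previous asymptotics yield $r_2(d)^2 \sim d/(2\pi e) < d/12$ for all large $d$, so $\{\|X\|_2 \leq r_2(d)\}$ is a left-tail deviation event for the sum $\|X\|_2^2 = \sum_i X_i^2$ of bounded i.i.d.\ variables. Applying Hoeffding's inequality (or even just Chebyshev, since a constant multiplicative gap suffices) to $\sum_i X_i^2 \in [0,d/4]$ gives an exponential bound $\prob(\|X\|_2^2 \leq r_2(d)^2) \leq e^{-cd}$ for some $c>0$, which drives the ratio to $0$ and completes the proof.

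The main obstacle is simply bookkeeping: verifying the Stirling asymptotics carefully enough to pin down the limiting constant $1/(2\pi e)$ and checking it is strictly below the mean $1/12$, which is what makes the concentration bound effective rather than vacuous.
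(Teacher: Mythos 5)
Your proposal is correct and follows essentially the same route as the paper's proof: normalize the $\ell_\infty$ ball, use Stirling to get $r_2(d) \sim c\sqrt{d}$ from the equal-volume constraint, rewrite the volume ratio as $\prob(\sum_i X_i^2 \leq r_2(d)^2)$ for i.i.d.\ uniform coordinates, and observe that the threshold sits a constant factor below the mean ($1/(2\pi e) < 1/12$ in your normalization, $2/(\pi e) < 1/3$ in the paper's), so Hoeffding gives an exponentially small bound. The only differences are cosmetic (your cube has side $1$ rather than $2$, which rescales the constants), so no further comment is needed.
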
 
\begin{proof} 
Without loss of generality, let us fix the radius of the $\linf$ ball to $1$ (denoted $B_{\infty,d}(1)$). One can show that for all $d$, $\Vol\left( B_{2,d}\left(r_2(d)\right)\right) = \Vol\left(B_{\infty,d}\left(1\right)\right)$. Where $r_2(d)=\frac{2}{\sqrt{\pi}}\Gamma(\frac{d}{2}+1)^{1/d}$, $\Gamma$ is the gamma function, and $B_{2,d}\left(r_2(d)\right)$ is the \ltwo ball of radius $r_2(d)$. Then, thanks to Stirling's formula,  $r_2(d)\sim \sqrt{\frac{2}{\pi e}} d^{1/2}$. Finally, if we denote $\mathcal{U}_S$, the uniform distribution on set $S$, by using  Hoeffding inequality between Eq. (9) and (10), we get:

\begin{align}
&\frac{\Vol(B_{2,d}(r_2(d))\bigcap B_{\infty,d}(1))}{\Vol(B_{\infty,d}(1))} \\
=&\prob_{x\sim \mathcal{U}_{B_{\infty,d}(1)}}\left[x\in B_{2,d}(r_2(d))\right] \\
=&\prob_{x\sim \mathcal{U}_{B_{\infty,d}(1)}}\left[\textstyle \sum_{i=1}^d |x_i|^2\leq r_2^2(d)\right]\\
\leq &\exp{- d^{-1} \left( r_2^2(d)-d\mathbb{E}|x_1|^2\right)^2} \\
\leq &\exp{-\left( \frac{2}{\pi e}-\frac23\right)^2d+o(d)}. 
\end{align}
\noindent
Then the ratio between the volume of intersection of the ball and the volume of the ball converges towards 0.
\end{proof}
Theorem~\ref{theorem:nullvolume} states that, when $d$ is large enough, \ltwo bounded perturbations have a null probability of being also in the \linf ball of the same volume. As a consequence, for any value of $d$ that is large enough, a defense mechanism that offers full protection against $\linf$ adversarial examples is not guaranteed to offer any protection against $\ltwo$ attacks, and vice-versa\footnote{Th. \ref{theorem:nullvolume} can easily be extended to any two balls with different norms. For clarity, we restrict to the case of \linf and \ltwo norms.}.

Remark that this result defeats the 2-dimensional intuition: if we consider a 2 dimensional problem setting, the \linf and the \ltwo balls have an important overlap (as illustrated in Figure~\ref{figure:balls}(a)) and the probability of sampling in the intersection of the two balls is bounded by approximately 98\%. However, as we increase the dimensionality $d$, this probability quickly becomes negligible, even for very simple image datasets such as MNIST. An instantiation of  the bound for classical image datasets is presented in Table~\ref{table:datadim}. The probability of sampling in the intersection of the \linf and \ltwo balls is close to zero for any realistic image setting. In large dimensions, the volume of the corner of the \linf ball is much bigger than it appears in Figure~\ref{figure:balls}(a).

\begin{table}[h]
\centering
{\footnotesize
\begin{tabular}{c|r|r}
\toprule
\textbf{Dataset} & \multicolumn{1}{c|}{\boldmath{}$d$\unboldmath{}}  & \multicolumn{1}{c}{\textbf{Inter.} (in $\log_{10}$)}\\
\midrule
-- & 2 & -0.009 \\
MNIST & 784  & -144\\
CIFAR& 3072 & -578\\
ImageNet& 150528 &-28946\\
\bottomrule
\end{tabular}}
\caption{Bounds of Theorem~\ref{theorem:nullvolume} on the volume of intersection of  $\ell_2$ and $\ell_\infty$ balls at equal volume for typical image classification datasets. When $d=2$, the bound is $ 10^{-0.009}\sim 0.98$.}
\label{table:datadim}
\end{table}

\subsection{No Free Lunch in practice}

Our theoretical analysis shows that if adversarial examples were uniformly distributed in a high dimensional space, then any mechanism that perfectly defends against \linf adversarial examples has a null probability of protecting against \ltwo-bounded adversarial attacks and vice-versa. Although existing defense mechanisms do not necessarily assume such a distribution of adversarial examples, we demonstrate that whatever distribution they use, it offers no favorable bias w.r.t the result in Theorem~\ref{theorem:nullvolume}. 
As we discuss in Sec.~\ref{sec:preliminaries}, there are two distinctive attack settings: loss maximization (PGD) and perturbation minimization (C\&W). We analyse the first setting in details and conduct a second series of experiments to demonstrate that the results are similar if we consider the second setting.

\paragraph{Adversarial training vs. loss maximization attacks}
To demonstrate that \linf adversarial training is not robust against PGD-\ltwo attacks, we measure the number of \ltwo adversarial examples generated with PGD-\ltwo, lying outside the \linf ball. (Note that we consider {\em all} examples, not just the ones that successfully fool the classifier). To do so, we use the same experimental setting as in Section~\ref{sec:building_defense_mechanisms} with $\epsilon_\infty$ and $\epsilon_2$ such that the volumes of the two balls are equal. Additionally, we also measure the average \linf and \ltwo norms of these adversarial examples, to understand more precisely the impact of  adversarial training, and we report the accuracy, which reflects the number of adversarial examples that successfully fooled the classifier (cf. Table~\ref{tab:mean_norm_pgd_attack_ben}~(top)). The same experiment is conducted for \ltwo adversarial training against PGD-\linf and the results are presented in Table~\ref{tab:mean_norm_pgd_attack_ben}~(bottom). All experiments in this section are conducted on CIFAR-10, and the experimental setting is fully detailed in Section~\ref{sec:experimental_settings}.

\begin{table}[h!]
  \centering
  {\footnotesize
    \begin{tabular}{lrrr}
    \toprule
    \textbf{PGD-}\ltwo \textbf{vs.} \boldmath{}$\rightarrow$\unboldmath{} & \textbf{Unprotected} & \textbf{AT-}\linf \\\midrule
     Examples inside \ltwo ball & 100\% & 100\% \\
     Average \ltwo norm & 0.83 & 0.83 \\\midrule
     Examples inside \linf ball & 0\% &  0\% \\
     Average \linf norm & 0.075 & 0.2\\\midrule
     Accuracy under attack & 0.00 & 0.37\\
    \bottomrule \\
    \toprule
    \textbf{PGD-}\linf \textbf{vs.} \boldmath{}$\rightarrow$\unboldmath{} & \textbf{Unprotected} & \textbf{AT-}\ltwo \\\midrule
     Examples inside \ltwo ball & 100\% & 100\% \\
     Average \ltwo norm & 1.4 & 1.64\\\midrule
     Examples inside \linf ball & 0\% &  0\%  \\
     Average \linf norm &0.031 &0.031\\\midrule
     Accuracy under attack & 0.00 & 0.37\\
    \bottomrule
    \end{tabular}}
    \caption{(Top) number of PGD-\ltwo adversarial examples inside the \linf and inside the \ltwo ball, without and with \linf adversarial training. (Bottom) number of PGD-\linf adversarial examples inside the \linf and inside the \ltwo ball, without and with \ltwo adversarial training. On CIFAR-10 ($d=3072$).}
  \label{tab:mean_norm_pgd_attack_ben}%
\end{table}%

The results are unambiguous: {\em none} of the adversarial examples generated with PGD-\ltwo are inside the \linf ball (and thus in the intersection of the two balls). As a consequence, we cannot expect adversarial training \linf to offer any guaranteed protection against \ltwo adversarial examples. We illustrate this phenomenon using Figure~\ref{figure:balls}~(b): notice that the \ltwo adversarial example represented in this figure cannot be protected using \linf adversarial training which is only designed to push the decision boundary (red line) outside of the \linf ball (square), but not outside of the \ltwo ball (circle). Our results demonstrate that {\em all} PGD-\ltwo examples are already in this upper area (outside the intersection), before \linf adversarial training. Therefore \linf adversarial training is unnecessary. 

The second experiment naturally demonstrates a similar behaviour. We first observe that adversarial examples generated with PGD-\linf lying outside the \ltwo ball cannot be eliminated using \ltwo adversarial training (as illustrated in Figure~\ref{figure:balls}~(c)). However,  Table~\ref{tab:mean_norm_pgd_attack_ben} shows that all examples are already outside the \ltwo ball, clustered around the corner of the \linf ball (average distance is 1.64 compared to $0.031 \times \sqrt{3072} = 1.71$ for the corner). Therefore, any defense method (including \ltwo adversarial training) that would eliminate only adversarial examples inside the \ltwo balls, cannot be efficient against \linf adversarial examples. 

The comparison of accuracy under PGD-\ltwo attack of a classifier defended by either \linf or \ltwo adversarial training corroborate our analysis. In fact, when defended with AT-\linf the accuracy of the classifier under attack is $0.37$, while the AT-\ltwo defends the classifier up to $0.52$ i.e. $40\%$ better. Similarly, a classifier defended with AT-\linf with an accuracy under PGD-\linf attack of $0.43$ performs $16\%$ better than the one defended with AT-\ltwo which obtains $0.37$ accuracy under attack. These results keep confirming our claim: \ltwo-based defenses are inadequate to defend against \linf attacks, and vice-versa.

\begin{table}[htbp]
  \centering
  {\footnotesize
    \begin{tabular}{l|r|r}
    \toprule
      & \textbf{Unprotected} & \textbf{AT}-$\ell_\infty$ \\
    \midrule
    Examples inside intersection & 70\% & 29\% \\
    Examples outside intersection & 30\% & 71\% \\\hline
    Accuracy under attack & 0.00  & 0.00 \\
    \bottomrule
    \end{tabular}}
  \caption{This table shows the amount of adversarial examples inside the $\ell_\infty$ ball and inside the $\ell_2$ ball but outside the $\ell_\infty$ ball. We can observe a clear shift between a baseline model (no defense) and a model trained with Adversarial Training PGD $\ell_\infty$ attacked with C\&W attack \cite{carlini2017towards}.}
  \label{tab:balls_of_carlini_attack}
\end{table}%

\paragraph{Adversarial training vs. perturbation minimization attacks.}
We now study the performances of an \ltwo perturbation maximization attack (C\&W) with and without AT-\linf. It allows us to understand in which area C\&W discovers adversarial examples and the impact of AT-\linf. The results are reported in Table~\ref{tab:balls_of_carlini_attack}. First, when the classifier is undefended, we observe that $70\%$ of adversarial examples lie inside the intersection of the two balls. This phenomenon is due to the fact that C\&W minimizes the \ltwo norm of the perturbation. Therefore without AT, the attack is able to discover adversarial examples that are very close to the original image, where the \linf and the \ltwo balls overlap.  When the model is trained with AT-\linf, we observe a clear shift: $71\%$ of the examples are now outside the \linf, but still inside the $\ell_2$ ball, as illustrated in Figure~\ref{figure:balls}~(b). This means that C\&W attack still minimizes the \ltwo norm of the perturbation while updating its search space to ignore the examples in the \linf ball. Since C\&W was always able to discover adversarial examples in this area, AT-\linf offers no extra benefit in terms of robustness (0\% Accuracy). Together, these results and Theorem~\ref{theorem:nullvolume} confirm that $\ell_\infty$-based defenses are vulnerable to \ltwo-based perturbation minimization attacks.

\begin{table*}[htp]
  \centering
    {\scriptsize
\begin{tabular}{l||r||r|r|r|r||r|r|r|r||r|r|r|r}
\toprule
  & \multicolumn{1}{c||}{\textbf{Baseline}} & \multicolumn{2}{c|}{\textbf{AT}} & \multicolumn{2}{c||}{\textbf{MAT}} & \multicolumn{2}{c|}{\textbf{NI}} & \multicolumn{2}{c||}{\textbf{MNI}} & \multicolumn{2}{c|}{\textbf{RAT-}$\ell_\infty$} & \multicolumn{2}{c}{\textbf{RAT-}$\ell_2$} \\
\cmidrule{2-14}  & \multicolumn{1}{c||}{--} & \multicolumn{1}{c|}{$\ell_\infty$} & \multicolumn{1}{c|}{$\ell_2$} & \multicolumn{1}{c|}{Max} & \multicolumn{1}{c||}{Rand} & \multicolumn{1}{c|}{$\mathcal{N}$} & \multicolumn{1}{c|}{$\mathcal{U}$} & \multicolumn{1}{c|}{Mix} & \multicolumn{1}{c||}{Conv} & \multicolumn{1}{c|}{$\mathcal{N}$} & \multicolumn{1}{c|}{$\mathcal{U}$} & \multicolumn{1}{c|}{$\mathcal{N}$} & \multicolumn{1}{c}{$\mathcal{U}$} \\
\midrule
\multicolumn{1}{c||}{Natural examples} & 0.94 & 0.85 & 0.85 & 0.80 & 0.80 & 0.79 & 0.87 & 0.84 & 0.79 & \cellcolor[rgb]{ .878,  .949,  .902}0.74 & \cellcolor[rgb]{ .878,  .949,  .902}0.80 & 0.79 & 0.87 \\
PGD-$\ell_\infty$ 20  & \cellcolor[rgb]{ .996,  .671,  .667}0.00 & 0.43 & 0.37 & 0.37 & 0.40 & \cellcolor[rgb]{ .996,  .671,  .667}0.23 & \cellcolor[rgb]{ .996,  .671,  .667}0.22 & \cellcolor[rgb]{ .996,  .671,  .667}0.19 & \cellcolor[rgb]{ .996,  .671,  .667}0.20 & \cellcolor[rgb]{ .878,  .949,  .902}0.35 & \cellcolor[rgb]{ .878,  .949,  .902}0.40 & \cellcolor[rgb]{ .996,  .671,  .667}0.23 & \cellcolor[rgb]{ .996,  .671,  .667}0.22 \\
PGD-$\ell_2$ 20 & \cellcolor[rgb]{ .996,  .671,  .667}0.00 & 0.37 & 0.52 & 0.50 & 0.55 & 0.34 & 0.36 & 0.33 & 0.32 & \cellcolor[rgb]{ .878,  .949,  .902}0.43 & \cellcolor[rgb]{ .878,  .949,  .902}0.39 & 0.34 & 0.37 \\
C\&W-$\ell_2$  60 & \cellcolor[rgb]{ .996,  .671,  .667}0.00 & \cellcolor[rgb]{ .996,  .671,  .667}0.00 & \cellcolor[rgb]{ .996,  .671,  .667}0.00 & \cellcolor[rgb]{ .996,  .671,  .667}0.00 & \cellcolor[rgb]{ .996,  .671,  .667}0.00 & 0.33 & 0.53 & 0.41 & 0.32 & \cellcolor[rgb]{ .878,  .949,  .902}0.30 & \cellcolor[rgb]{ .878,  .949,  .902}0.41 & 0.33 & 0.34 \\
\midrule
\textbf{Min Accuracy} & 0.00 & 0.00 & 0.00 & 0.00 & 0.00 & 0.23 & 0.22 & 0.19 & 0.20 & \textbf{0.30} & \textbf{0.39} & 0.23 & 0.22 \\
\bottomrule
\end{tabular}%
}
  \caption{This table shows a comprehensive list of results consisting of the accuracy of several defense mechanisms against $\ell_2$ and $\ell_\infty$ attacks. This table main objective is to compare the overall performance of ‘single‘ norm defense mechanisms (AT and NI presented in the Sec.~\ref{subsec:defense_mechanisms}) against mixed norms defense mechanisms (MNI, MAT \& RAT mixed defenses presented in Sec.~\ref{sec:building_defense_mechanisms}). The red values present all accuracy \emph{below} 30\% which shows that all defense mechanisms have ‘weaknesses‘ with the exception of RAT.}
  \label{tab:results}
\end{table*}

\section{Building Defenses against Multiple Adversarial Attacks}
\label{sec:building_defense_mechanisms}

So far, we have shown that adversarial defenses are able to protect only against the norm they have been trained on. In order to solve this problem, we propose several strategies to build defenses against multiple adversarial attacks. These strategies are based on the idea that both types of defense must be used simultaneously in order for the classifier to be protected against multiple attacks. In this section we evaluate several of these defense strategies, and compare them against state-of-the-art attacks using a solid experimental setting (the detailed description of the  experimental setting is described in Section~\ref{sec:experimental_settings}).

\subsection{MAT -- Mixed Adversarial Training}\label{subsec:mixed_adversarial_training}
Earlier results have shown that AT-$\ell_p$ improves the robustness against corresponding $\ell_p$-bounded adversarial examples, and the experiments we present in this section corroborate this observation (See Table~\ref{tab:results}, column: AT). Building on this observation, it is natural to examine the efficiency of Mixed Adversarial Training (MAT) against mixed \linf and \ltwo attacks. MAT is a variation of AT that uses both \linf-bounded adversarial examples and \ltwo-bounded adversarial examples as training examples. 

As discussed by~\citet{tramer2019adversarial}, there are several possible strategies to mix the adversarial training examples. The first strategy (MAT-Rand) consists in randomly selecting one adversarial example among the two most damaging \linf and \ltwo, and to use it as a training example, as described in Equation~\ref{eq:mat-rand}:

\noindent
{\em MAT-Rand:}
\begin{equation}
    \min_{\theta}\expect_{(x, y) \sim \mathcal{D}} \left[\expect_{p\sim\mathcal{U}({\{2, \infty\})}} \left[ \max_{\norm{\tau}_p \leq \epsilon} \mathcal{L} \left( f_{\theta}(x+\tau), y \right) \right] \right].
    \label{eq:mat-rand}
\end{equation}

An alternative strategy is to systematically train the model with the most damaging adversarial example (\linf or \ltwo). As described in Equation~\ref{eq:mat-max}: 

\noindent
{\em MAT-Max:}
\begin{equation}
    \min_{\theta}\expect_{(x, y) \sim \mathcal{D}} \left[ \max_{p \in \{2, \infty\}} \max_{\norm{\tau}_p \leq \epsilon} \mathcal{L} \left( f_{\theta}(x+\tau), y \right) \right].
    \label{eq:mat-max}
\end{equation}
\noindent

The accuracy of MAT-Rand and MAT-Max are reported in Table~\ref{tab:results} (Column: MAT). As expected, we observe that MAT-Rand and MAT-Max offer better robustness both against PGD-\ltwo and PGD-\linf adversarial examples than the original AT does. More  generally, we can see that AT is a good strategy against loss maximization attacks, and thus it is not surprising that MAT is a good strategy against mixed loss maximization attacks. However, AT is very weak against perturbations minimization attacks such as C\&W, and MAT is no better against such attacks. This weakness makes MAT of little practical use.

\subsection{MNI -- Multiple Noise Injection}

Another important technique to defend against adversarial examples is to use Noise Injection (NI). \citet{pinot2019theoretical} demonstrated that injecting noise in the network can give provable defense against adversarial examples. Furthermore, we found that NI offers better protection than AT against perturbation minimization attacks such as C\&W, thus, they are good candidates to obtain models robust to multiple attacks. In this work, besides the generalized Gaussian noises, already investigated in previous works, we evaluate the efficiency of uniform distributions which are generalized Gaussian of order $\infty$. As shown in Table~\ref{tab:results} (Columns: NI), noise injection from this distribution gives better results than Gaussian noise injection against all the attacks except PGD-\linf. 

To obtain the best out of both noises, we propose to combine them (MNI) either by convolution (Conv) or by mixture (Mix). Hence, the final noise vector comes from one of the following probability density functions:\\
{\em MNI-Conv}:
\begin{equation}
 \frac{1}{\sqrt{2 \pi \sigma_1^2}} \exp\left\{ \frac{-x^2}{2 \sigma_1^2}\right\}  * \frac{\mathds{1}\{ |x| \leq \sigma_2\}}{2 \sigma_2}
\end{equation}
{\em MNI-Mix}:
\begin{equation}
\frac{1}{\sqrt{8 \pi \sigma_1^2}} \exp\left\{ \frac{-x^2}{2 \sigma_1^2}\right\}   + \frac{\mathds{1}\{ |x| \leq \sigma_2\}}{4 \sigma_2}.
\end{equation}
Following the literature~\cite{pinot2019theoretical}, we choose $\sigma_1=0.25$. Accordingly, we take $\sigma_2=0.2$. 
The results are presented in Table~\ref{tab:results}~(Column: MNI).
We found that MNI offers comparable results against the experimental setting in \cite{pinot2019theoretical}, but does not improve over NI with a uniform distribution. 

\subsection{RAT -- Randomized Adversarial Training }\label{subsec:randomized_adversarial_training}

We now examine the performance of Randomized Adversarial Training (RAT) which mixes Adversarial Training with Noise Injection. We consider the two symmetric settings: RAT-\linf and a noise from a normal distribution, as well as RAT-\ltwo and a noise from a uniform distribution. The corresponding loss function is defined as follows: \begin{equation}
    \min_{\theta}\expect_{(x, y) \sim \mathcal{D}} \left[ \max_{\norm{\tau} \leq \epsilon} \mathcal{L} \left( \tilde{f}_{\theta}(x+\tau), y)  \right) \right].
\end{equation}
\noindent where $\tilde{f}_\theta$ is a randomized neural network with noise injection as described in Section~\ref{subsec:randomized_training}.

The results of RAT are reported in Table~\ref{tab:results}~(Columns: RAT-\linf and RAT-\ltwo).
We can observe that the first setting offers the best extra robustness, which is consistent with previous experiments, since AT is generally more effective against \linf attacks whereas NI is more effective against \ltwo-attacks. Overall, RAT-\linf and a noise from uniform distribution offer the best minimal robustness with at least $0.39$ accuracy, 16 points above the second best (NI with noise from a normal distribution, with 0.22).

\subsection{Experimental setting}
\label{sec:experimental_settings}

To compare the robustness provided by the different defense mechanisms, we use strong adversarial attacks and a conservative setting: the attacker has a total knowledge of the parameters of the model (white-box setting) and we only consider untargeted attacks  (a misclassification from one target to any other will be considered as adversarial). To evaluate defenses based on noise injection, we use {\em Expectation Over Transformation} (EOT), the rigorous experimental protocol  proposed by \citet{athalye2017synthesizing} and later used by \citet{athalye2018obfuscated,carlini2019evaluating} to identify flawed defense mechanisms. 

To attack the models, we use state-of-the-art algorithms PGD and C\&W (see Section~\ref{sec:preliminaries}). We run PGD with 20 iterations to generate adversarial examples and with 10 iterations when it is used for adversarial training. We run C\&W with 60 iterations to generate adversarial examples. For bounded attacks, the maximum \linf bound is fixed to $0.031$ and the maximum \ltwo bound is fixed to $0.83$. As discussed in Section~\ref{sec:preliminaries}, we chose these values so that the \linf and the \ltwo balls have similar volumes. Note that $0.83$ is slightly above the values typically used in previous publications in the area, meaning the attacks are stronger, and thus  more difficult to defend against.

All experiments are conducted on CIFAR-10 with the Wide-Resnet 28-10 architecture. We use the training procedure and the hyper-parameters described in the original paper by~\citet{zagoruyko2016wide}. Training time varies from 1 day (AT) to 2 days (MAT) on 4 GPUs-V100 servers.

\section{Related Work}

Adversarial attacks have been an active topic in the machine learning community since their discovery~\cite{globerson2006nightmare, biggio2013evasion,Szegedy2013IntriguingPO}. Many attacks have been developed. Most of them solve a loss maximization problem with either $\ell_\infty$~\cite{goodfellow2014explaining,kurakin2016adversarial,madry2018towards}, $\ell_2$~\cite{carlini2017towards,kurakin2016adversarial,madry2018towards}, $\ell_1$~\cite{tramer2019adversarial} or $\ell_0$~\cite{papernot2016limitations} surrogate norms. 

Defending against adversarial examples is a challenging problem since the number of layers makes it difficult to understand the geometry of the decision boundary. Despite empirically proven efficient, Adversarial training~\cite{goodfellow2014explaining} gives no formal defense guarantees. Besides this line of work, randomization and smoothing~\cite{xie2018mitigating,lecuyer2018certified,pinot2019theoretical,KolterRandomizedSmoothing} have gained popularity since they provide guarantees, but so far, the efficiency of these methods remains limited against $\ell_\infty$-based attacks.   

An open question so far is to build an efficient defense against multiple norms. Concurrently to our work, \citet{tramer2019adversarial} proposed to tackle this issue by mixing randomized training with attacks for different norms to defend against multiple perturbations. Then, \citet{salman2019provably} proposed to mix adversarial training with randomized smoothing to have better certificates against adversarial attacks. These methods are closely related respectively to MAT and RAT. Aside from these similarities, we propose a new geometric point of view for robustness against multiple perturbations, that is backed up theoretically and experimentally. We also conduct a rigorous and full comparison of RAT and MAT as defenses against adversarial attacks. Finally, we propose MNI, that adds mixture of noise to our network and gets promising results. To the best of our knowledge, this is the first work that covers mixtures and convolution of noises with different natures.

\section{Conclusion}
In this paper, we tackle the problem of protecting neural networks against multiple attacks crafted from different norms. First, we demonstrate that existing defense mechanisms can only protect against one type of attacks. Then we consider a variety of strategies to mix defense mechanisms and to build models that are robust against multiple adversarial attacks. We show that {\em Randomized Adversarial Training} offers the best global performance.

\clearpage
\newpage

\bibliographystyle{named}
\bibliography{bibliography}

\end{document}